\newcommand{\cmark}{\ding{51}}
\newcommand{\xmark}{\ding{55}}
\definecolor{cvprblue}{rgb}{0.21,0.49,0.74}
\title{OSV: One Step is Enough for High-Quality Image to Video
Generation}
\author{
Xiaofeng Mao$^{1\ast}$~~~ Zhengkai Jiang$^{2\ast\dagger}$~~~Fu-yun Wang$^{3\ast}$~~~Jiangning Zhang$^{4,5}$~~~ \\ 
Hao Chen$^{1}$~~~Mingmin Chi$^{1\ddagger}$~~~Yabiao Wang$^{4,5\ddagger}$~~~Wenhan Luo$^{2}$ \\
$^1$\normalfont{Fudan University}~~~$^2$\normalfont{Hong Kong University of Science and Technology}~~~\\
$^3$\normalfont{The Chinese University of Hong Kong}~~~$^4$\normalfont{Zhejiang University}~~~$^5$\normalfont{Tencent YouTu Lab} \\
\small{$\{$xfmao23, haochen22$\}$@fudan.edu.cn~~~mmchi@fudan.edu.cn} \\
\small{$\{$zkjiang,whluo$\}$@ust.hk~~~fywang@link.cuhk.edu.hk~~~vtzhang@tencent.com~~~yabiaowang@zju.edu.cn} \\
}
\begin{document}
\maketitle

\if TT\insert\footins{\noindent\footnotesize\\
    $^{\ast}$~Equal Contribution \\
    $^{\dagger}$~Project Leader \\
    $^{\ddagger}$~Corresponding Author~(This work was supported  by the Natural Science Foundation of China under contract 62171139). \\

}\fi

\begin{abstract}

Video diffusion models have shown great potential in generating high-quality videos, making them an increasingly popular focus. However, their inherent iterative nature leads to substantial computational and time costs. Although techniques such as consistency distillation and adversarial training have been employed to accelerate video diffusion by reducing inference steps, these methods often simply transfer the generation approaches from Image diffusion models to video diffusion models. As a result, these methods frequently fall short in terms of both performance and training stability. In this work, we introduce a two-stage training framework that effectively combines consistency distillation with adversarial training to address these challenges. Additionally, we propose a novel video discriminator design, which eliminates the need for decoding the video latents and improves the final performance. 
Our model is capable of producing high-quality videos in merely one-step, with the flexibility to perform multi-step refinement for further performance enhancement. Our quantitative evaluation on the OpenVid-1M benchmark shows that our model significantly outperforms existing methods. Notably, our 1-step performance~(FVD $171.15$) exceeds the 8-step performance of the  consistency distillation based method, AnimateLCM~(FVD $184.79$), and approaches the 25-step performance of advanced Stable Video Diffusion~(FVD $156.94$).
% fig:main

\end{abstract}
\section{Introduction}

Video synthesis provides rich visual effects and creative expression for films, television, advertisements, and games. Diffusion models are playing an increasingly important role in video synthesis~\cite{blattmann2023align,shi2024motion, svd,make-a-video,ho2022video,he2022latent,li2023videogen}. Typically, diffusion models involve a forward process and a reverse process. In the forward process, real data is iteratively perturbed with noise until it converges to a simple noise distribution, typically Gaussian. In the reverse process, the noise is gradually removed, ultimately transitioning back to the target data distribution. However, this reverse process usually requires the numerical solution of a generative ODE, termed Probability Flow ODE (PF-ODE)~\cite{sde}. The iterative nature of this numerical solving process leads to significantly higher computational costs compared to other generative models~(e.g., GANs)~\cite{gan,yin2024one,yin2024improved}. These computational demands are even more significant in video synthesis; for instance, generating a short 2-second video clip using Stable Video Diffusion~(SVD)~\cite{svd} on a high-performance A100 GPU can take over 30 seconds.

\begin{figure}[t]
\centering
\includegraphics[width=0.7\linewidth]{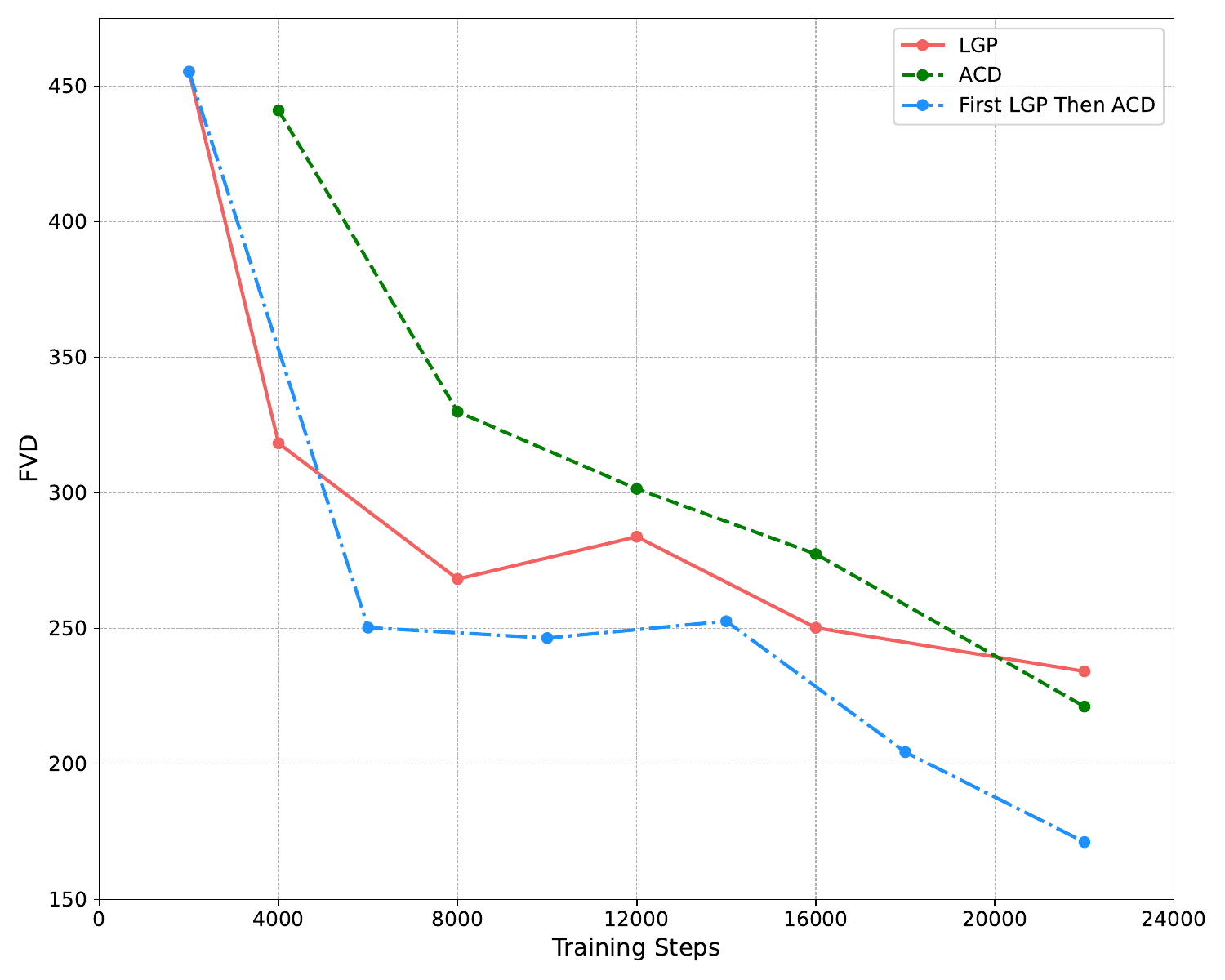}
\caption{OSV is a two-stage video diffusion acceleration strategy. In the first stage, GAN is applied for better training efficiency. In the second stage, we apply consistency distillation to boost the performance upper-bound.}
\label{fig:teaser1}
\end{figure}

\begin{figure*}[!t]
  \centering
  \includegraphics[width=\linewidth]{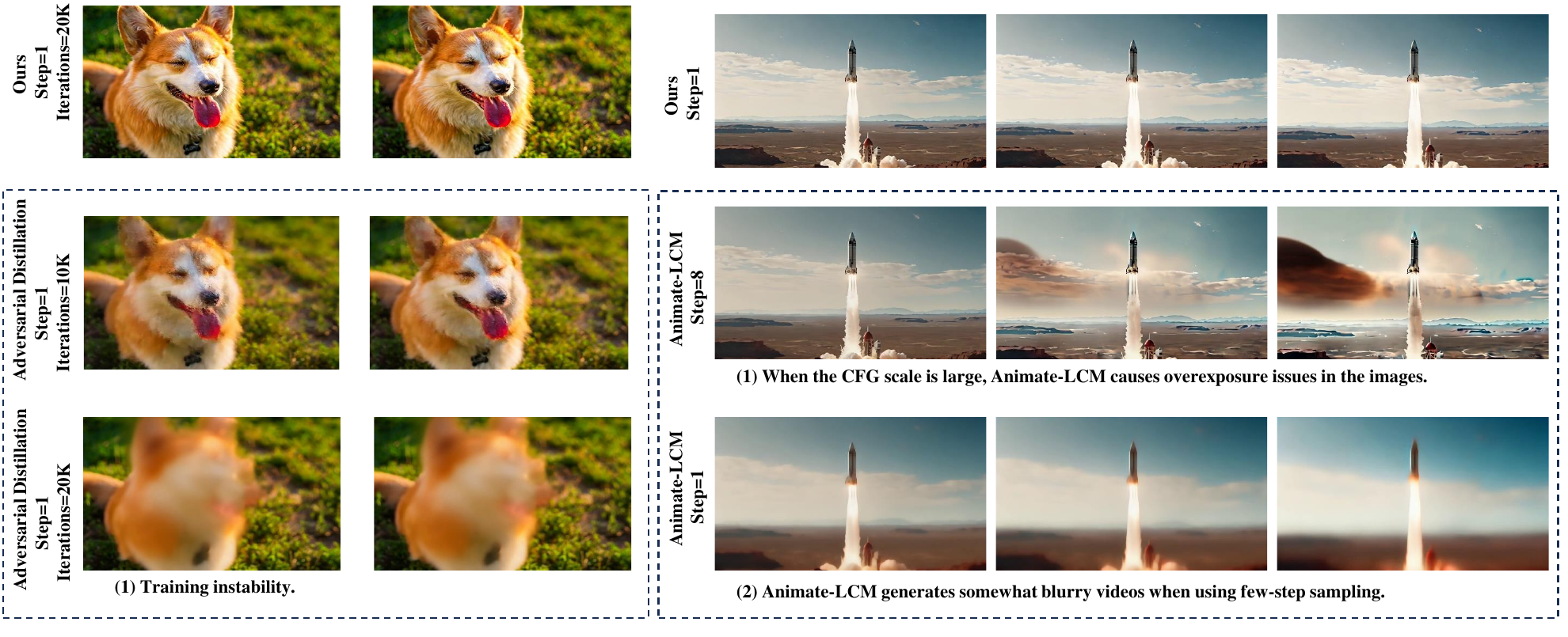}
 \caption{Summative motivation. We observe and summarize crucial limitations for (latent) consistent models and generalize to the design space, which are well addressed by our approach.
 }
  \label{fig:compare}
\end{figure*}

Currently, several strategies have been proposed to reduce the computational costs associated with video generation. Some approaches utilize consistency distillation in the latent space~(LCM)~\cite{luo2023latent,song2023consistency} for acceleration; however, they often struggle to achieve competitive results in few-step settings, such as one or two steps. Other methods initialize with pre-trained diffusion models and incorporate adversarial loss in a GAN-based framework to accelerate generation~(GAN). Nonetheless, these methods frequently suffer from training instability. Moreover, since advanced diffusion models are typically trained in the latent space~(i.e., they apply an auto-encoder to encode the high-resolution videos into the latent space to facilitate the training), GAN-based methods require decoding latent features into the real image or video space before passing them to the discriminator. This process incurs substantial memory usage and computational overhead, particularly in high-resolution generation tasks. Figure\textcolor{red}{~\ref{fig:compare}} illustrates our summative observations of previous methods, which we summarize as follows: 

\noindent \textbf{Exposure Issues}: Although LCM can accept classifier-free guidance~(CFG)~\cite{ho2021classifier}, higher CFG values can lead to exposure problems~\cite{wang2024phased}. This complicates the selection of hyperparameters for training.

\noindent \textbf{Efficiency}: LCM faces slow training convergence and often produces poor results, particularly when with less than four inference steps, which limits sampling efficiency. While introducing GANs can alleviate some of the efficiency issues of LCM, it introduces new problems.

\noindent \textbf{Increased Training Burden}: 
Because advanced diffusion models are typically trained in the latent space—where high-resolution videos are encoded via an auto-encoder—GAN-based approaches need to decode these latent features back into the real image or video space before they can be evaluated by the discriminator~\cite{sauer2023adversarial}. This decoding step significantly increases memory consumption and computational load, particularly when dealing with high-resolution or long video-length generation tasks.   

\noindent \textbf{Training Instability}: GANs are known for training instability. Introducing GANs can easily lead to training instability, causing the model to degrade as training progresses.

\noindent \textbf{Inability to Iterative Refine}: GANs training does not allow for iterative refinement of generated results, unlike consistency models, leading to poorer results with more sampling iterations.

Moreover, most existing acceleration methods for video diffusion models are derived from image diffusion models (or minor modifications such as adding temporal discriminator heads). For example, Animate-LCM~\cite{wang2024animatelcm} draws inspiration from LCM, and SF-V~\cite{zhang2024sf} is influenced by UFOGen~\cite{xu2024ufogen}. Directly transferring methods such as ADD~\cite{sauer2023adversarial}, LADD~\cite{sauer2024fast}, and UFOGen to video diffusion models presents several issues, including a \textbf{significant increase in GPU memory usage} and the potential for model collapse due to adversarial training, which may lead to \textbf{extremely weak actions in the generated videos}.

In this work, we present \textbf{OSV}~(\textbf{O}ne \textbf{S}tep \textbf{V}ideo Generation), allowing for high-quality image-to-video generation in one step while still supporting multi-step refinement. OSV is a two-stage video diffusion acceleration training strategy. Specifically, in the first stage, we incorporate Low-Rank Adaptation~(LoRA)~\cite{hu2021lora} and fully utilize GANs training, with real data as the true condition for the GANs, greatly accelerating model convergence. In the second stage, we introduce the LCM training function, with data generated by the teacher model serving as the true condition for the GANs, while only fine-tuning specific layers of the network. To further facilitate the training convergence, we replace the commonly applied one-step ODE solver with multi-step solving, ensuring higher distillation accuracy. The second training stage addresses GANs training instability in the later training phase and further improves the performance upper-bound with the knowledge transferred from the teacher video diffusion models. To further improve the training efficiency, we revise the current popular discriminator designs and propose to discard the VAE decoder in the adversarial training. As illustrated in Figure\textcolor{red}{~\ref{fig:compare_gan}}, we replace the VAE decoder with a simple up-sampling operator. That is, we directly feed the upsampled video latent into the discriminator whose backbone is pre-trained on the real image/video space~(DINOv2~\cite{oquab2023dinov2}). Our ablation study shows it not only reduces the training cost but also achieves better performance. Our proposed two-stage training not only facilitates early stage training efficiency but also stablizes training and improves the performance upper-bound in the later stage. As shown in Figure\textcolor{red}{~\ref{fig:teaser1}} verifies our claim.

In summary, we investigate the limitations of previous consistency model-based and GAN training-based methods and propose a two-stage training approach, combining the strength of both and achieving state-of-the-art performance on fast image-to-video generation.

\section{Related Works}

\noindent\textbf{Diffusion Distillation.}
Denoising process usually has many steps, making them 2-3 orders of magnitude slower than other generative models such as GANs and
VAEs. Recent progress on diffusion acceleration has focused on speeding up iteratively time-consuming generation process through distillation~\cite{song2023consistency,song2023improved,salimans2022progressive, heek2024multistep,yan2024perflow,ren2024hyper,zheng2024trajectory,sauer2023adversarial,sauer2024fast,kim2023consistency,zhou2024score,wang2024phased,li2024t2v}. Typically, they train a generator to approximate the ordinary differential equation (ODE) sampling trajectory of the teacher model, resulting in fewer sampling steps. Particularly, Progressive Distillation~\cite{salimans2022progressive,meng2023distillation} trains the student to predict directions pointing to the next flow locations. ADD~\cite{sauer2023adversarial} leverages an adversarial loss to ensure high-fidelity image generation. SDXL-Lighting~\cite{lin2024sdxl} combines progressive distillation and adversarial distillation, striking a balance between mode coverage and quality. 

When it comes to video distillation, Video-LCM~\cite{wang2023videolcm} builds upon existing latent video diffusion and incorporates consistency distillation techniques, achieving high-fidelity and smooth video synthesis with only four sampling steps. Furthermore, Animate-LCM~\cite{wang2024animatelcm} proposes a decoupled consistency learning strategy that separates the distillation of image generation priors and motion generation priors, enhancing visual quality and training efficiency. SF-V~\cite{zhang2024sf} follows diffusion-as-GAN paradigms and proposes single-step video generation by leveraging adversarial training on the SVD~\cite{blattmann2023stable} model. In contrast, our empirical observations show that decoupling adversarial latent training and consistency distillation improves training stability and generation quality. Additionally, instead of using UNet itself as the discriminator feature encoder, we use DINOv2~\cite{oquab2023dinov2} as a feature extractor, which both improves training efficiency and generation quality. Moreover, a multi-step consistency distillation training strategy is also proposed.
\section{Preliminaries}
\textbf{Diffusion Models. } Diffusion mdoel~\cite{ddpm,sde} gradually introduces random noise through the diffusion process, transforming the current state ${\mathbf {x}_0}$ into a previous state ${\mathbf {x}_t}$. 

% The diffusion process is formulated as:
% \begin{equation}
% q({\mathbf {x}_t} \vert {\mathbf {x}_0}) = \mathcal{N}({\mathbf {x}_t}; \sqrt{\bar{\alpha}_t} {\mathbf {x}_0}, \beta_t \mathbf{I})
% \end{equation}where ${\mathbf {x}_t}$ is the noised image at time-step $t$, $\bar{\alpha}_t$ is the predefined scale factor, and $\beta_t = 1 - \bar{\alpha}_t$.

We consider the continuous case of diffusion models. The forward process of a diffusion model can be described as:
\begin{equation}
d {\mathbf {x}} = {f}_t({\mathbf {x}}) \, dt + g_t \, d\boldsymbol{w}. \label{eq:sde-forward}
\end{equation}
where $f_{t}(\mathbf {x}) = \frac{ d \log \alpha_{t}}{ d{t}}\mathbf {x}$ and $g_{t}^2 = \frac{\mathrm d \sigma_t^2}{ d t} - 2 \frac{ d \log \alpha_t}{ d t} \sigma_{t}^2$, 
% $
% {f}_t(\mathbf {x}) = \frac{1}{\bar{\alpha}_t} \frac{d\bar{\alpha}_t}{dt} \mathbf {x},  g_t^2 = 2 \bar{\alpha}_t \bar{\beta}_t \frac{d}{dt} \left( \frac{\bar{\beta}_t}{\bar{\alpha}_t} \right).$
${\alpha}_t$ is the predefined scale factor and $\boldsymbol w_{t}$ denoting the standard Winer process. $\sigma_{t}$ controls the level of noise.

Considering the reverse process of diffusion models in the continuous case:
\begin{equation}
d \mathbf {x} = \left( {f}_t(\mathbf {x}) - \frac{1}{2} g_t^2 \nabla_{\mathbf {x}} \log p_t(\mathbf {x}) \right) \, dt. \label{eq:flow-ode}
\end{equation}This is known as the Probability Flow Ordinary Differential Equation (PF-ODE). We use a neural network \( \epsilon_\theta (\mathbf {x}_t, t) \) to approximate \( \nabla_{\mathbf {x}} \log p_t(\mathbf {x}) \).

\noindent\textbf{Consistency Models. }Consistency Models~\cite{song2023consistency} are built upon the PF-ODE in continuous-time diffusion models. Given a PF-ODE that smoothly transforms data into noise, Consistency Models learn to map any point to the initial point of the trajectory at any time step for generative modeling. The formula can be described as:
\begin{equation}
{f} : \left(\mathbf {x}_t, t\right) \mapsto \mathbf {x}_\kappa, \quad t \in [\kappa, T].
\end{equation}

$\kappa$ is a number greater than 0 but close to 0. To ensure the boundary conditions hold for any consistent functions, Consistency Models typically employ skip connections. Suppose we have a free-form deep neural network $F_\theta$, which can be formulated as:
\begin{equation}
f_\theta(\mathbf {x}, t) = c_\text{skip}(t) \mathbf {x} + c_\text{out}(t) F_\theta(\mathbf {x}, t),
\end{equation}
where \( c_\text{skip}(t) \) and \( c_\text{out}(t) \) are differentiable functions such that \( c_\text{skip}(\kappa) = 1 \), and \( c_\text{out}(\kappa) = 0 \), $F_\theta(\mathbf {x}, t)$ represents the output of the neural network.

For training Consistency Models, the output is enforced to be the same for any pair belonging to the same PF-ODE trajectory, i.e., \( f(\mathbf{x}_t, t) = f(\mathbf{x}_{t'}, t') \) for all \( t, t' \in [\kappa, T] \). To maintain training stability, an Exponential Moving Average (EMA) of the target model is used, given by:
% \[
% \mathcal{L}_\text{CD}^N(\theta, \theta^{-}; \varphi) \coloneqq \mathbb{E}\left[\lambda(t_n) d\left(\vf_\vtheta(\mathbf{x}_{t_{n+1}}, t_{n+1}), f_{\theta^-}(\hat{\mathbf{x}}_{t_n}^\varphi, t_n)\right)\right]
% \]

\begin{equation}
\resizebox{\linewidth}{!}{
$
\mathcal{L}_\text{CD}^N(\theta, \theta^{-}; \phi) = \mathbb{E}\left[ \lambda(t_n) d\left(f_\theta(\mathbf{x}_{t_{n+1}}, t_{n+1}), f_{\theta^-}({\mathbf{x}}_{t_n}^\phi, t_n)\right)\right],
$
}
\end{equation}
where $\lambda(t_n)$ is a weighting function, $d(\cdot, \cdot)$ is a distance metric and ${\mathbf {x}}_{t_n}^\phi = \mathbf {x}_{t_{n+1}} + (t_n - t_{n+1})\Phi(\mathbf {x}_{t_{n+1}}, t_{n+1}; \phi)$. $\Phi(\cdots; \phi)$ represents the update function of a one-step ODE solver applied to the empirical PF-ODE.  $\theta^{-}$ is the EMA version weight of target models. 

\begin{figure}
    \centering
    \includegraphics[width=\linewidth]{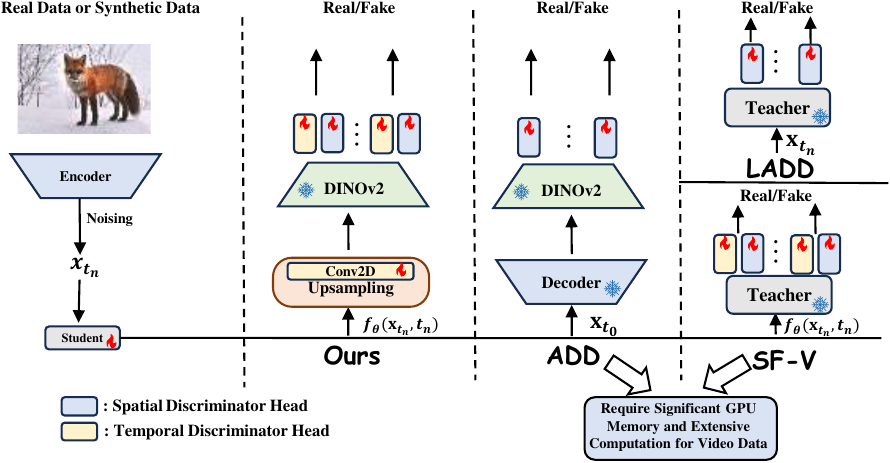}
    \caption{Comparison of Different Adversarial Training Methods. SF-V requires the encoder of UNet as the feature extraction part of the discriminator. ADD perform adversarial distillation on raw image pixel, which needs to convert latent to image thorough VAE Decoder. In contrast, we directly upsample the latent signal, replacing the decoder with a simple upsampling layer. Only this modification results in a significant speedup in training on NVIDIA H800 at a resolution of 512$\times$512, reducing the average iteration time from \textcolor{blue}{4.29} seconds to \textcolor{blue}{2.61} seconds, and also decreases the occurrence of floating-point overflows during half-precision training. In addition, OSV training consumes \textcolor{blue}{35.8} gigabytes (GB) of GPU memory, a substantial reduction compared to SF-V's \textcolor{blue}{73.5} gigabytes (GB) requirement.}
    \label{fig:compare_gan}
    % \vspace{-1.5em}
\end{figure}

% fig:main
\begin{figure*}[!t]
  \centering
  \includegraphics[width=\linewidth]{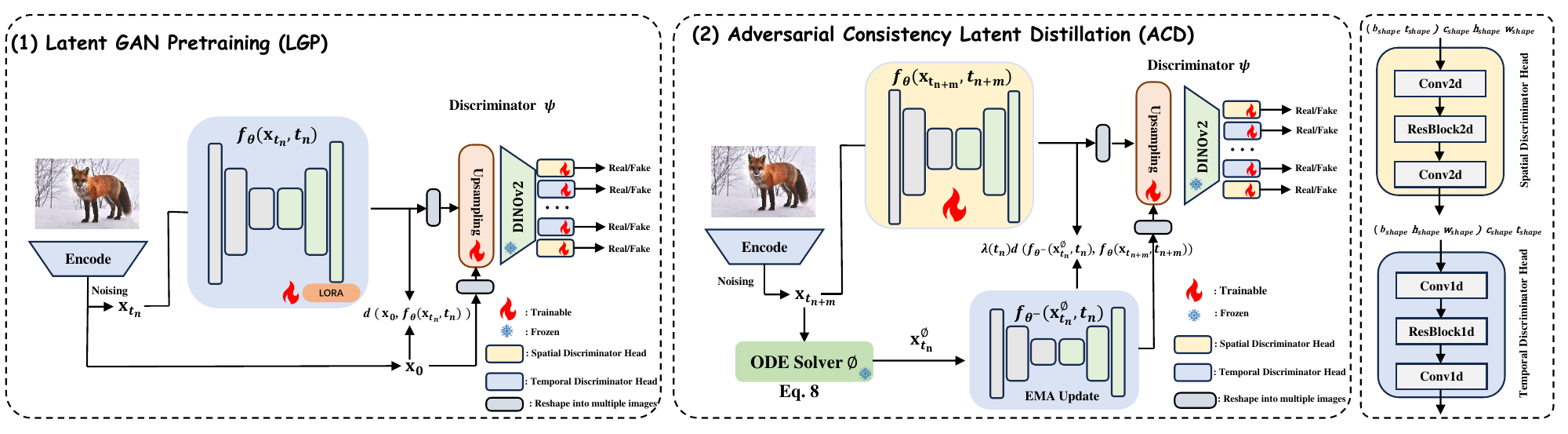}
  \caption{Overview of OSV. In the first stage, we combine GAN loss and Huber loss~\cite{song2023improved} for better training efficiency. In the second stage, we use consistency distillation loss to boost the performance upper-bound. $b_{sahpe}$, $t_{shape}$, $c_{shape}$, $h_{shape}$ and $w_{shape}$ represent the batch size, number of frames, color channels, height, and width of the input video, respectively.}
  \label{fig:main}
\end{figure*}

\section{Method}

In this section, we introduce the specific technical details of our OSV model. The model employs a two-stage training method to minimize GAN training instability and incorporates a novel multi-step consistency model solver to enhance its efficiency. Observing the negative impact of CFG on the distilled model, we remove CFG and design a new high-order solver.

In this section, we detail our OSV model, which employs a two-stage training process to enhance video generation. The first stage leverages  GAN training, allowing for rapid improvement in image quality during the initial training steps. The second stage combines GAN training with consistency distillation, providing a balanced approach that further stabilizes training and enhances model performance. Finally, we introduce a novel high-order solver, which refines generation results by a high-order prediction, leading to higher accuracy and efficiency in video generation.

\noindent\textbf{Network Components. }The OSV training process consists of three main components: a student model with weights $\theta$, a EMA model with weights $\theta^{-}$, a pre-trained teacher model with frozen weights $\phi$, and a discriminator with weights $\psi$, as shown in Figure\textcolor{red}{~\ref{fig:main}}. Specifically, the student and teacher models share the same architecture, with the student model initialized from the teacher model. For the discriminator, we adopt the same structure as StyleGAN-T~\cite{sauer2023stylegan}, utilizing DINOv2~\cite{oquab2023dinov2}. We freeze the pretrained weight of DINOv2 and add trainable temporal discriminator heads and spatial discriminator heads for discrimination of the features extracted DINOv2 inspired by SF-V~\cite{zhang2024sf}. The temporal discriminator heads are composed of  1D convolution blocks. The spatial discriminators are composed of 2D convolution blocks.

\noindent\textbf{Latent GAN Pretraining. }
As shown in Figure\textcolor{red}{~\ref{fig:compare_gan}}, SF-V and ADD implement different adversarial distillation methods. The discriminator in SF-V shares the same architecture and weights as the pre-trained UNet encoder backbone and is enhanced by adding a spatial discrimination head and a temporal discrimination head after each backbone block. ADD uses DINOv2 as the discriminator. Although the discriminator in ADD reduces computational load and memory usage compared to SF-V, the student model's generated data needs to be passed through a VAE decoder before being input into the discriminator, which undoubtedly increases both computational load and memory usage. We find that using DINOv2 directly in the latent space also achieves adversarial distillation and significantly saves memory and computational resources compared to the pixel space.

We find that during the pre-training phase of the network, introducing only GAN for adversarial distillation, without LCM, can achieve rapid image quality improvement at low steps. Specifically, the student model optimizes the Huber loss and adversarial loss between the generated data $f_\theta(\mathbf{x}_{t_n}, t_n)$ and the real data $\mathbf{x}_0$, as follows:
\begin{equation}
\resizebox{0.9\linewidth}{!}{
$\mathcal{L}_{\text{OSV}}^{\text{d}_1}(\boldsymbol \theta, \boldsymbol \theta^{-}; \boldsymbol \phi; \boldsymbol \psi) $
     =$ \text{ReLU}(1 -  
D_\psi({\mathbf x}_{0})) + \text{ReLU}(1 + D_\psi(f_\theta(\mathbf{x}_{t_n}, t_n))),  $
}
\end{equation}
\begin{equation}
\resizebox{0.9\linewidth}{!}{
$\mathcal{L}_{\text{OSV}}^{\text{g}_1}(\boldsymbol \theta, \boldsymbol \theta^{-}; \boldsymbol \phi; \boldsymbol \psi) $
     = $\lambda^{LGP} * \text{ReLU}(1-D_\psi(f_\theta(\mathbf{x}_{t_n}, t_n)))+ d(\mathbf {x}_0,f_\theta(\mathbf{x}_{t_n}^\phi, t_n)), $
}
\end{equation}
where $D_\psi$ is the discriminator, $\text{ReLU}(x) = x$ if $x>0$ else $\text{ReLU}(x) = 0$ and $\lambda^{LGP}$ is a hyper-parameter. $d(x,y)=\sqrt{\parallel x-y \parallel^2_2 +c^2}-c $~\cite{song2023improved}, where $c > 0$ is an adjustable constant. Notably, during this phase, we load the student model with LoRA training and conduct a very short training period. This approach is adopted because we have observed that full-parameter training tends to result in the generation of static images, which is a consequence of model collapse caused by the input images. Using LoRA ensures that the student model retains most of the knowledge from the teacher model while facilitating rapid convergence. In fact, the first stage can be replaced by loading LoRA modules pretrained with methods such as Animate-LCM, which offers great flexibility. For a fair comparison, we train the first stage using only the method illustrated in Figure~\ref{fig:main}.

\noindent\textbf{Adversarial Consistency Latent Distillation. }
LoRA weights stay unmerged in the second stage, part of trainable parameters, and merge only at final inference. We initialize the student model and EMA model for this stage using the student model from the first stage. Similarly, we initialize the discriminator for the second stage using the discriminator from the first stage. Unlike AnimateLCM, we solve for $\mathbf{x}_{t_n}^\phi$ using the following equations:

%\begin{adjustbox}{max width=\linewidth}
\begin{equation}
\resizebox{\linewidth}{!}{
$\begin{aligned}
{\mathbf {x}}_{t_{n+m-1}}^\phi &= \mathbf {x}_{t_{n+m}} + (t_{n+m-1} - t_{n+m})\Phi(\mathbf {x}_{t_{n+m}}, t_{n+m}, c; \phi), \\
{\mathbf {x}}_{t_{n+m-2}}^\phi &= \mathbf {x}_{t_{n+m-1}}^\phi + (t_{n+m-2} - t_{n+m-1})\Phi(\mathbf {x}_{t_{n+m-1}}^\phi, t_{n+m-1}, c; \phi), \\
&\quad\vdots \\
{\mathbf {x}}_{t_n}^\phi &= \mathbf {x}_{t_{n+1}}^\phi + (t_{n} - t_{n+1})\Phi(\mathbf {x}_{t_{n+1}}^\phi, t_{n+1}, c; \phi),
\end{aligned}$
}
\end{equation}
where $c$ represents the condition, which in our case is an image embedding. $m$ controls the number of steps of the ODE solver. We introduce classifier-free guidance distillation similar to~\cite{luo2023latent}: $\hat{\Phi}(\mathbf{x}_{t_{n+m}}, t_{n+m}, c; \phi) = {\Phi}(\mathbf{x}_{t_{n+m}}, t_{n+m}, c_{zero}; \phi)+w*({\Phi}(\mathbf{x}_{t_{n+m}}, t_{n+m}, c; \phi)-{\Phi}(\mathbf{x}_{t_{n+m}}, t_{n+m}, c_{zero}; \phi))$, where \( c_{zero} \) represents the image embedding set to zero and $w$ controls the strength. We also optimize the adversarial loss between the student-generated data $f_\theta(\mathbf{x}_{t_{n+m}})$ and the teacher-generated data $f_{\theta^{-}}(\mathbf{x}_{t_n}^\phi, t_n)$, as follows: 
\begin{equation}
\begin{adjustbox}{max width=1\linewidth}
$\mathcal{L}_{\text{OSV}}^{\text{d}_2}(\boldsymbol \theta, \boldsymbol \theta^{-}; \boldsymbol \phi; \boldsymbol \psi) $
     =$ \text{ReLU}(1 -  
D_\psi(f_{\theta^{-}}(\mathbf{x}_{t_n}^\phi, t_n))) + \text{ReLU}(1 + D_\psi(f_\theta(\mathbf{x}_{t_{n+m}}, t_{n+m}))) \, ,$
\end{adjustbox}
\end{equation}
\begin{equation}
\begin{adjustbox}{max width=1\linewidth}
$\mathcal{L}_{\text{OSV}}^{\text{g}_2}(\boldsymbol \theta, \boldsymbol \theta^{-}; \boldsymbol \phi; \boldsymbol \psi) $
     = $\lambda^{ACD} * \text{ReLU}(1-D_\psi(f_\theta(\mathbf{x}_{t_{n+m}},t_{n+m})))+\lambda(t_n) d(f_{\theta^{-}}(\mathbf{x}_{t_n}^\phi, t_n), f_\theta(\mathbf{x}_{t_{n+m}},t_{n+m})) \, ,$
\end{adjustbox}
\end{equation}
where $\lambda^{ACD}$ is a hyper-parameter, $d(x,y)=\sqrt{\parallel x-y \parallel^2_2 +c^2}-c, c>0$ and $\lambda(t_n)$ is a weighting function, similar to \cite{song2023improved}. It is worth noting that, as shown in Figure\textcolor{red}{~\ref{fig:teaser}}, our multi-step solving method not only achieves higher accuracy than the single-step solving method with the same number of iterations but also demonstrates higher accuracy within the same training time. (When $m$ is set to 5, the time for 22,000 iterations is approximately equal to the time for 42,000 iterations when $m$ is set to 1.)

\begin{figure}[!t]
    \centering
    \includegraphics[width=0.7\linewidth]{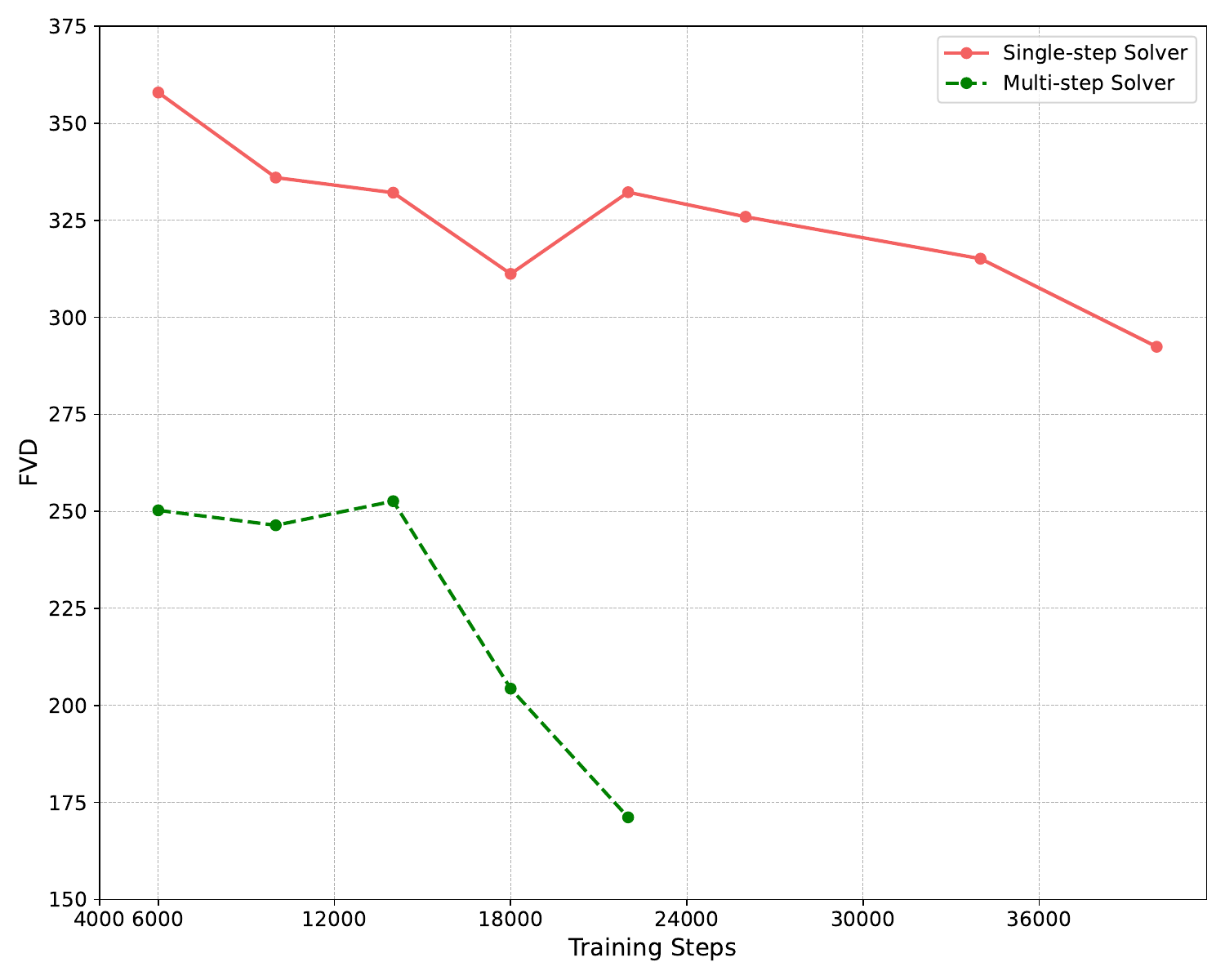}
    \caption{Effectiveness of the proposed Time Travel Sampler. Compared to one solver step, multi-step solving exhibits a faster training convergence speed and superior performance, demonstrating the effectiveness of the proposed method. $m$ is set to 5. }
    \label{fig:teaser}
    \vspace{-0.5cm}
\end{figure}

\noindent\textbf{Why Decompose into Two Stages? }LGP and ACD have fundamental differences. The training objective of LGP is to align the data distribution with the model's generated distribution. Even when the student model achieves consistency, its adversarial loss remains non-zero, thereby disrupting the consistency learning process. In contrast, ACD does not exhibit this issue. However, LGP matches noise-free data, which facilitates a rapid decrease in the loss during the early stages of training (as shown in Figure~\ref{fig:teaser1}) and enables the generation of images with less noise in a single step. We follow to the definition of PCM~\cite{wang2024phased}, letting \(\mathcal{T}_{t \to s}\) denote the flow from \(p_t\) to \(p_s\). Let \(\mathcal{T}^{\boldsymbol{\phi}}_{t \to s}\) and \(\mathcal{T}^{\boldsymbol{\theta}}_{t \to s}\) represent the transformation mappings of the ODE trajectories for the pre-trained diffusion model and our consistency model, respectively. We redefine the loss functions for LGP and ACD as follows:
\begin{equation}
\begin{adjustbox}{max width=1\linewidth}
$\mathcal{L}_{\text{LGP}}^{adv} (\boldsymbol \theta, \boldsymbol \theta^-; \boldsymbol \phi, m) =  Dis\left(\mathcal T^{\boldsymbol \theta}_{t_{n+m}\to \epsilon}\#p_{t_{n+m}}\middle\|p_{0} \right) \, ,$
\end{adjustbox}
\end{equation}
\begin{equation}
\begin{adjustbox}{max width=1\linewidth}
$\mathcal{L}_{\text{ACD}}^{adv} (\boldsymbol \theta, \boldsymbol \theta^-; \boldsymbol \phi, m) =  Dis\left(\mathcal T^{\boldsymbol \theta}_{t_{n+m}\to \epsilon}\#p_{t_{n+m}}\middle\| \mathcal T^{\boldsymbol \theta^-}_{t_{n}\to \epsilon} \mathcal T^{\boldsymbol \phi}_{t_{n+m}\to t_{n}}\# p_{t_{n+m}} \right) \, ,$
\end{adjustbox}
\end{equation}
where $\#$ is the pushforward operator, and $Dis$ is the distribution distance metric. $\mathcal{L}_{\text{LGP}}^{adv}$ is always non-zero, while $\mathcal{L}_{\text{ACD}}^{adv}$ will also converge to zero. We provide a detailed discussion in the appendix.

\noindent\textbf{High Order Sampler Based On Time Travel. }As shown in Figure\textcolor{red}{~\ref{fig:compare}}, we observe the negative impact of using CFG on the distilled model. Even with smaller CFG weights, the improvement in video generation is minimal. We decide to remove CFG and design a higher-order sampler named Time Travel Sampler (TTS). Suppose the number of sampler steps is set to $k$, corresponding to the time function $t^k$, and the number of sampler steps is set to $k+1$, corresponding to the time function $t^{k+1}$. $t^{k+1}$ has one more time step compared to $t^k$. Let $t^k_{0}=0$ and $t^{k+1}_{0}=0$, we have $t^{k+1}_{i+1} < t^k_{i+1}, t^{k+1}_{i+1}>t^k_{i}, \forall i \in [1, k-1]$. During sampling, we set the number of sampler steps to $k$. Observing that the image generation quality improves as the time step $t$ decreases, given the sample $x_{t^k_{i+1}}$, we can first solve for the sampling result at the lower time step $t^{k+1}_{i+1}$ and then revert to $t^k_{i}$ to solve the consistency function again:
\begin{equation}
\resizebox{\linewidth}{!}{
$
\begin{aligned}
&f_\theta^{t^{k}_{i+1}} = c_\text{skip}(t^{k}_{i+1}) \mathbf{x}_{t_{i+1}^k} + c_\text{out}(t^{k}_{i+1}) F_\theta(\mathbf{x}_{t^k_{i+1}}, t^{k}_{i+1}),\mathbf{x}_{t^{k+1}_{i+1}}=f_\theta^{t^{k}_{i+1}}+\sigma_{t^{k+1}_{i+1}}\hat{\epsilon}_{i+1}^{k+1}\\
&\hat{\epsilon}_{i+1}^{k+1} = (\epsilon_0 + \frac{(\mathbf{x}_{t^k_{i+1}} - f_\theta^{t^{k}_{i+1}})}{\sigma_{t^{k}_{i+1}}})/2,\\
&f_\theta^{t^{k+1}_{i+1}} = c_\text{skip}(t^{k+1}_{i+1}) \mathbf{x}_{t_{i+1}^{k+1}} + c_\text{out}(t^{k+1}_{i+1}) F_\theta(\mathbf{x}_{t^{k+1}_{i+1}}, t^{k+1}_{i+1}), \\
&\hat{\mathbf{x}}_{t^k_{i}} = c_\text{skip}(t^k_{i+1})(f_\theta^{t^{k+1}_{i+1}}+\sigma_{t^k_{i+1}}\hat{\epsilon}_{i+1}^{k}) + c_\text{out}(t^k_{i+1}) F_\theta(\mathbf{x}_{t^{k+1}_{i+1}}, t^{k+1}_{i+1})+\sigma_{t_i^k}\epsilon_0,\hat{\epsilon}_{i+1}^{k} = (\epsilon_0 + \frac{(\mathbf{x}_{t^{k+1}_{i+1}} - f_\theta^{t^{k+1}_{i+1}})}{\sigma_{t^{k+1}_{i+1}}})/2, \\
\end{aligned}$}
\end{equation}
where \(\epsilon_0\) is the random Gaussian noise. Using TTS leads to an increase in NFE, but removing CFG reduces NFE.

\begin{figure*}[!t]
  \centering
  \includegraphics[width=0.9\linewidth]{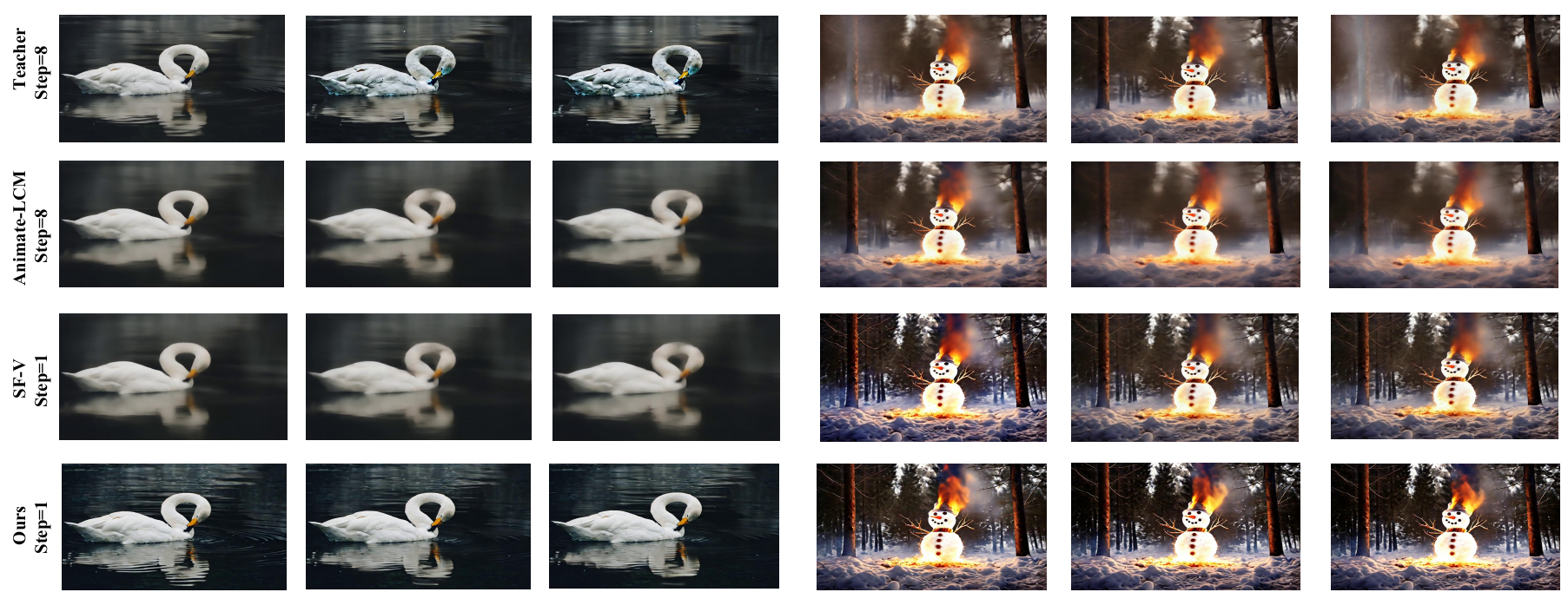}
 \caption{Qualitative generation results. One-step results of OSV with TTS achieves superior video clarity compared to other baselines. Please refer to the supplementary material for a better comparison of the videos generated by different models.}
  \label{fig:method_compare}
\end{figure*}

\section{Experiments}
\noindent\textbf{Implementation Details.} We apply stable video diffusion as the base model for most experiments. we uniformly sample 100 timesteps for training.  For better training and evaluation of our method, we utilize OpenVid-1M~\cite{nan2024openvid} for training and validation. We randomly select 1 million videos from OpenVid-1M as the training set and 1,000 videos as the test set. In the first stage, we fix the resolution of the training videos at 1024$\times$768, the FPS at 7, the batch size at 1, and the learning rate at 5e-6. The training is conducted over 2K iterations on 8 NVIDIA H800 GPUs. In the second stage, we fix the resolution of the training videos at 576$\times$320~(in the final 10K iterations, the resolution is increased to 1024$\times$768), the FPS at 7, the batch size at 1, and the learning rate at 5e-6. The training is conducted over 20K iterations on 2 NVIDIA H800 GPUs. We find that higher resolution in training datasets generally yields better results, but the impact varies across different stages, as discussed in Sec. 5.3. All stages use the Adam optimizer~\cite{kingma2014adam}. We use FVD~\cite{unterthiner2018towards} to evaluate our model. All models, including OSV, use SVD as the basic model architecture (Unet architecture), initialized with the same pretrained model, sharing no other differences with SVD except when explicitly mentioned. Apart from specifying TTS usage, we uniformly evaluate using the solver from Consistency Model.

\begin{table}[t]
  \caption{Image-to-video performance comparison on the validation set of OpenVid-1M. $\dagger$ means our implementations of SF-V~(no public weight).}
  \label{tab:1}
  \centering
  
  \begin{tabular}{cccc}
    \toprule
      Name  & Steps $\downarrow$ & NFE $\downarrow$ & FVD $\downarrow$ \\
    \midrule
    \multirow{4}*{Teacher$_\text{CFG=3.0}$~\cite{svd}} & 25 & 50 & 156.94 \\
                                      & 8  & 16 & 229.94 \\
                                      & 2  & 4  & 1015.25 \\
                                      & 1  & 2  & 1177.34 \\
    \midrule
    \multirow{4}*{AnimateLCM~\cite{wang2024animatelcm}} & 8 & 8 & 184.79 \\
                                                         & 4 & 4 & 405.80 \\
                                                         & 2 & 2 & 575.33 \\
                                                         & 1 & 1 & 997.94 \\
    \midrule
    %{SF-V$_{CFG=3.0}$~\cite{zhang2024sf}}$\dagger$ & 1 & 2 & 449.60 \\
    {SF-V~\cite{zhang2024sf}}$\dagger$                       & 1 & 1 & 425.68 \\
    \midrule
    Ours & 1 & 1 & 335.36 \\
    Ours$_\text{TTS}$                   & 1 & 2 & 171.15 \\
    Ours                    & 2 & 2 & 181.95 \\
    \bottomrule
  \end{tabular}
  
\end{table}

\begin{table*}[t]
\caption{Ablation study with OSV. Unless stated otherwise, we set the sampling steps to 1 and use the Time Travel Sampler (TTS).}
\label{tab:all}
\centering
\small
\subfloat[Effect of Multi-Step Solving.]{
\centering
\setlength{\tabcolsep}{3.8mm}
\begin{tabular}{cc}
\toprule
    ODE Solver Step & FVD$\downarrow$  \\	\midrule
$1$     &332.25       \\ 
$5$ & 171.15     \\ 
\bottomrule
\end{tabular}
\label{tab:Multi-Step}
}
\hspace{1em}
\subfloat[Effect of First Stage Training.]{
\centering
\setlength{\tabcolsep}{4.0mm}
\begin{tabular}{cc}
\toprule
Stage one & FVD$\downarrow$ \\	\midrule
\xmark    & 221.75     \\ 
\cmark & 171.15       \\ 
\bottomrule
\end{tabular}
\label{tab:First_Stage}
}
\hspace{3em}
\subfloat[Effect of Second Stage Training. Note that all settings have the same NFE.]{
\centering
\setlength{\tabcolsep}{6.5mm}
\begin{tabular}{ccc}
\toprule
Stage Two &TTS & FVD$\downarrow$ \\	\midrule
\xmark &\xmark &298.35    \\
\xmark &\cmark &234.13          \\
\cmark &\cmark &171.15  \\

\bottomrule
\end{tabular}
\label{tab:Second_Stage}
}\\
\subfloat[Effect of Adversarial Training.]{
\centering
\setlength{\tabcolsep}{4.0mm}
\begin{tabular}{cc}
\toprule
Adversarial Training & FVD$\downarrow$ \\	\midrule
\xmark    &405.41  \\ 
\cmark &171.15  \\ 
\bottomrule
\end{tabular}
\label{tab:Adversarial_Training}
} 
\subfloat[Effect of VAE Decoder.]{
\centering
\setlength{\tabcolsep}{6.5mm}
\begin{tabular}{cc}
\toprule
Vae Decoder & FVD$\downarrow$ \\	\midrule
\cmark &232.25    \\ 
\xmark &171.15  \\ 
\bottomrule
\end{tabular}
\label{tab:VAE_Decoder}
}
\hspace{3em}
\subfloat[Effect of CFG.]{
\centering
\setlength{\tabcolsep}{6.5mm}
\begin{tabular}{cc}
\toprule
CFG scale & FVD$\downarrow$ \\	\midrule
3.0 &531.23    \\ 
1.5 &426.71 \\ 
No CFG &335.36 \\ 
\bottomrule
\end{tabular}
\label{tab:CFG}
}
\\
\subfloat[Effect of Data Resolution in Stage One.]{
\centering
\setlength{\tabcolsep}{6.5mm}
\begin{tabular}{cc}
\toprule
Training Data Size & FVD$\downarrow$ \\	\midrule
576$\times$320     & 455.35       \\ 
1024$\times$576    & 388.86      \\ 
\bottomrule
\end{tabular}
\label{tab:SIZE_one}
}
\hspace{1em}
\subfloat[Effect of Data Resolution in Stage Two.]{
\centering
\setlength{\tabcolsep}{6.5mm}
\begin{tabular}{cc}
\toprule
Training Data Size & FVD$\downarrow$ \\	\midrule
576$\times$320~(10K) and 1024$\times$576~(10K) &171.15       \\ 
1024$\times$576        &173.14        \\ 
\bottomrule
\end{tabular}
\label{tab:SIZE_two}
}
\hspace{2em}\end{table*}

\subsection{Quantitative Experiments}
Table\textcolor{red}{~\ref{tab:1}} illustrates the quantitative comparison of OSV with the strong baseline methods Animate-LCM and SF-V. Observing the negative impact of CFG on the distilled models, we remove CFG from both the Animate-LCM and OSV models. This reduction led to decreased inference time for the student network. OSV significantly outperforms the baseline methods, especially at low steps. Our method is compared with SF-V and AnimateLCM, using different numbers of sampling steps for each method. Additionally, we find that using the high-order sampler in a single step resulted in an FVD of 171.15, compared to an FVD of 181.95 when performing direct inference with two steps. This demonstrates better results than direct two-step inference.

Figure\textcolor{red}{~\ref{fig:user_study}} shows the results of our user study, where our model achieve higher clarity and smoothness compared to the teacher.

\subsection{Qualitative Results}
Figure\textcolor{red}{~\ref{fig:method_compare}} shows the generated results of our method in image-to-video generation, all of which achieve satisfactory performance. The generated results demonstrate that our method effectively adheres to the consistency properties across different inference steps, maintaining similar styles and motions. Other methods either suffer from overexposure issues or blurring due to motion. We exhibit good visual quality and smooth motion with only one step.

\begin{figure}[!t]
    \centering
    \includegraphics[width=0.8\linewidth]{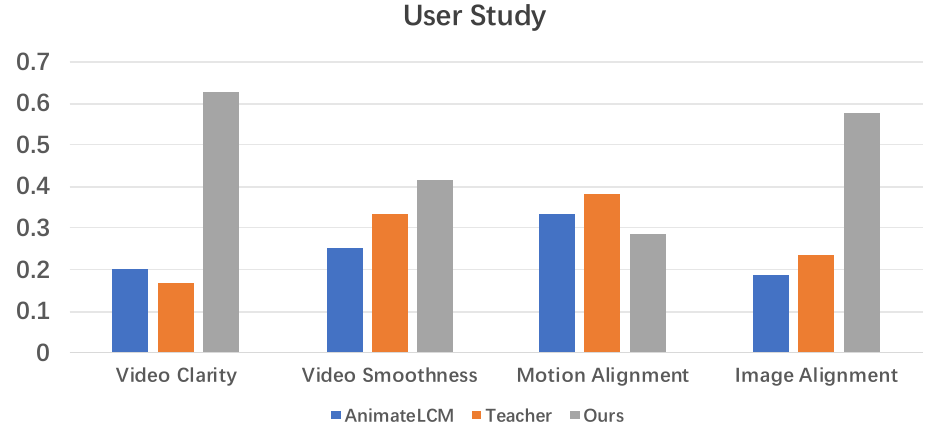}
    \caption{User study comparing our distilled model with its teacher and competing distillation baselines. For each model, we generate 30 videos across diverse scenarios, ask users to vote for the best-performing model. AnimateLCM with 8 sampling steps, SF-V with 1 sampling step, OSV with 1 sampling step and TTS, and the Teacher model with 25 sampling steps. }
    \label{fig:user_study}
    % \vspace{-1.5em}
\end{figure}

\subsection{Ablation Studies}

\noindent\textbf{Effect of Multi-Step Solving Method. }We set the OSV model with 5 solver steps as Baseline-1. To verify the effectiveness of the multi-step solving method, we remove the multi-step solving component and train the OSV model with 1 solver step under the same training settings. As shown in Table\textcolor{red}{~\ref{tab:Multi-Step}}, the multi-step solving method achieves higher solving accuracy. Considering the training time for all models is the same, our method also performs better, as illustrated in Figure\textcolor{red}{~\ref{fig:teaser}}.

\noindent\textbf{Effect of First Stage of Training and Second Stage of Traning. }We set the OSV model trained with both Stage 1 and Stage 2 as Baseline-2. To verify the effectiveness of Stage 1 training, we remove Stage 1 training and train the OSV model with only Stage 2 under the same training settings. As shown in Table\textcolor{red}{~\ref{tab:First_Stage}}, it is evident that performing both Stage 1 and Stage 2 training contributes more to the model's convergence. Stage 1 training primarily ensures that the student model can still generate detailed content at low steps, which aids in the consistency training of Stage 2 and prevents blurring issues similar to those in Animate-LCM. To verify the effectiveness of the second stage of training, we remove the second stage and train the OSV model with only the first stage under the same training settings. As shown in Table\textcolor{red}{~\ref{tab:Second_Stage}}, it is evident that the second stage of training contributes more to refining the generated videos. The consistency training in the second stage further enhances the details of the generated videos. We also find that our TTS sampler is effective for distilling the student model using GAN methods.

\noindent\textbf{Effect of Adversarial Training. }We set the OSV model with adversarial loss as Baseline-3. To verify the effectiveness of adversarial distillation, we remove the adversarial loss and train the OSV model with only Huber loss and consistency loss under the same training settings. As shown in Table\textcolor{red}{~\ref{tab:Adversarial_Training}}, adversarial training results in higher generation quality. Using only consistency loss leads to a fitting error between the student model and the teacher model.

\noindent\textbf{Effect of VAE Decoder. }We use Baseline-3. We add the VAE Decoder from the ADD method, as shown in Figure\textcolor{red}{~\ref{fig:compare_gan}}. As shown in Table\textcolor{red}{~\ref{tab:VAE_Decoder}}, we find that adding the VAE Decoder resulted in even worse performance. This indicates that performing adversarial training in the latent space is more beneficial for the discriminator.  As shown in Figure\textcolor{red}{~\ref{fig:vis}}, we visualize the latent space and pixel space of the input images. It can be observed that when the input images are compressed by the VAE, the outlines are preserved, retaining a significant amount of low-frequency information. This is beneficial for ViT-like models in feature extraction. We upsample the latent space data to a size suitable for DINOv2 feature extraction using sub-pixel convolution.

\noindent\textbf{Effect of CFG. }As shown in Table\textcolor{red}{~\ref{tab:CFG}}, we investigate the impact of CFG on video generation by the model. Figure\textcolor{red}{~\ref{fig:compare}} illustrates the overexposure issue caused by adding CFG. It is evident that even reducing the CFG scale still negatively affects the model. Removing CFG not only saves time but also improves the quality of the generated videos.

\noindent\textbf{Effect of Data Resolution in Stage One and Stage Two.} In the first stage, we do not introduce the consistency distillation loss, so the quality of the generated videos relates to the dataset size. If we use a dataset size of 576$\times$320, the videos are downsampled quite small, resulting in significant information loss. In the second stage, we enforce the consistency of the student model's trajectories at different time steps, so the quality of the generated videos depends on the quality of the original videos and the videos generated by the teacher model. Using a dataset size of 576$\times$320 in the early stages significantly reduces the model's distillation time and shows little difference in the FVD metric during the second stage. However, we observe that although the FVD metric shows little difference, the SVD model distilled on the low-resolution dataset encounters more failure cases during inference, such as generating videos with smaller motions. Therefore, we recommend training on a high-resolution dataset if sufficient computational resources are available.
\begin{figure}[t]
\centering
\includegraphics[width=0.41\linewidth]{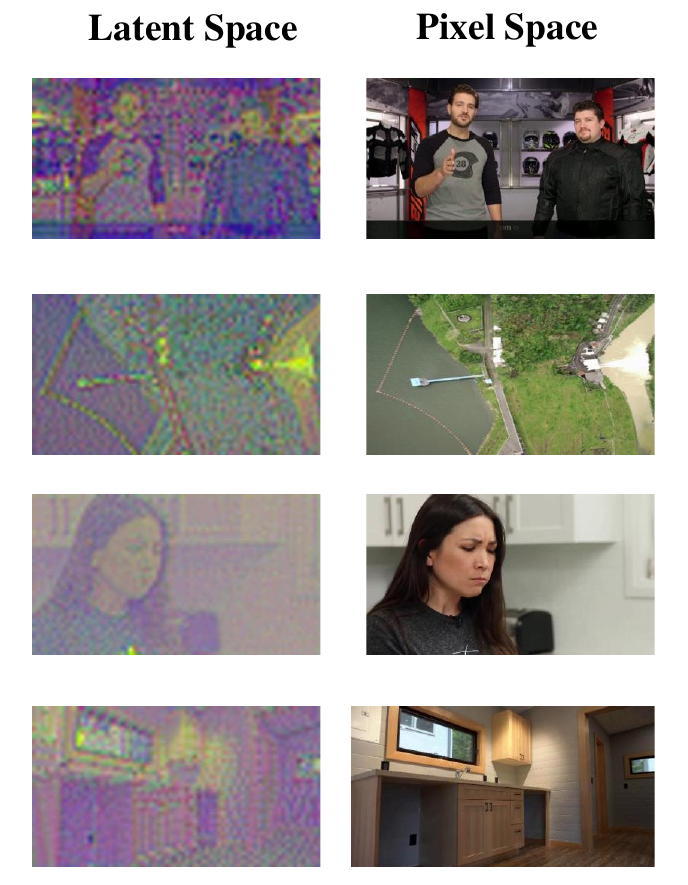}
\caption{Visualize the latent space and pixel space of the input images. To map the latent space to the RGB color space, we quantize the data within the latent space to the range [0, 255].}
\label{fig:vis}
\end{figure}

\section{Conclusion}
In this paper, we introduce the OSV, which utilizes a two-stage training process to enhance the stability and efficiency of video generation acceleration. In the first stage, we employ GAN training, achieving rapid improvements in generation quality at low steps. We propose a novel video discriminator design where we leverage pretrained image backbones~(DINOv2) and lightweight trainable temporal discriminator heads and spatial discriminator heads. We also innovatively propose to replace the commonly applied VAE decoding process with a simple up-sampling operation, which greatly facilitates training efficiency and improves model performance. The second stage integrates consistency distillation, further refining the model's performance and ensuring training stability. Additionally, we show that applying multi-step ODE solver can increase the accuracy of predictions but also facilitate faster training convergence. By removing the CFG and introducing the Time Travel Sampler~(TTS), we are able to further improve video generation quality. Our experiments demonstrate that the OSV significantly outperforms existing methods in both speed and accuracy, making it a robust and efficient solution for video generation acceleration. \\
\noindent\textbf{Limitations.} There are some bad cases when the distilled model generates human motion. For example, the distilled model produce significant blurring when attempting to generate hand movements. Increasing the number of inference steps to 4 resolve the issue. In future work, we plan to introduce stronger feature extraction networks as replacements for DINOv2. We observe that the distilled model exhibits less motion and heavily relies on the input images. This is a common phenomenon in distilled models with few-step sampling.

{
    \small
    \bibliographystyle{ieeenat_fullname}
    \bibliography{main}
}

\appendix

% 定义 customthm 环境
\newtheorem{customthm}{Theorem}[section]
\DeclarePairedDelimiter{\norm}{\lVert}{\rVert}

\clearpage
\onecolumn
\setcounter{page}{1}
\section*{\Huge Appendix}

\startcontents[appendices]
\printcontents[appendices]{l}{1}{\setcounter{tocdepth}{2}}

% \renewcommand{\thesection}{\Roman{section}}
% \renewcommand{\thesection}{\Roman{Discussion on LGP and ACD}}
% \section{Related Works}

\clearpage
\section{Proofs}\label{app:proof}

The following is based on consistency distillation~\cite{song2023consistency}.
\subsection{Multi-Step Solving Method}\label{app:proof_cd}
\begin{customthm}
Let $\Delta t \coloneqq \max_{n \in \llbracket 1, N-1\rrbracket}\{|t_{n+1} - t_{n}|\}$, and $f(\cdot,\cdot;\phi)$ be the target phased consistency function induced by the pre-trained diffusion
 model (empirical PF-ODE). Assume $f_\theta$ satisfies the Lipschitz condition: there exists $L > 0$ such that for all $t \in [\epsilon, T]$, $x$, and $y$, we have $\norm{f_\theta(x, t) - f_\theta(y, t)}_2 \leq L \norm{x - y}_2$. Assume further that for all $n \in \llbracket 1, N-1 \rrbracket$, the ODE solver called at $t_{n+1}$ has local error uniformly bounded by $O((t_{n+1} - t_n)^{p+1})$ with $p\geq 1$. Then, if $Dis(f_\theta(x_{t_{n+m}}, t_{n+m}),f_{\theta}(\hat{x}_{t_n}^\phi, t_n)) = 0$, we have
\begin{align*}
    \sup_{n, x}\|f_{\theta}(x, t_n) - f(x, t_n; \phi)\|_2 = O((\Delta t)^p).
\end{align*}
\end{customthm}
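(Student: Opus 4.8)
The plan is to adapt the standard consistency-distillation error analysis of Song et al.\ to the $m$-step solver, using an induction along the time grid that propagates the per-step discretization error. Fix an exact PF-ODE trajectory and define the pointwise error $e_n := \norm{f_\theta(x_{t_n}, t_n) - f(x_{t_n}, t_n; \phi)}_2$, so that the claim reduces to bounding $e_n$ uniformly and then taking the supremum. The base case is supplied by the boundary condition $c_\text{skip}(\kappa)=1$, $c_\text{out}(\kappa)=0$: at $t_1=\kappa$ both $f_\theta(\cdot,t_1)$ and $f(\cdot,t_1;\phi)$ reduce to the identity, so $e_1=0$.

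For the inductive step I would use the hypothesis that the distance is zero, i.e.\ $f_\theta(x_{t_{n+m}}, t_{n+m}) = f_\theta(\hat{x}_{t_n}^\phi, t_n)$, together with the trajectory-invariance of the exact consistency function, $f(x_{t_{n+m}}, t_{n+m}; \phi) = f(x_{t_n}, t_n; \phi)$ (both points lie on the same trajectory, which $f(\cdot;\phi)$ collapses to one endpoint). Rewriting $e_{n+m}$ with these identities and inserting $\pm f_\theta(x_{t_n}, t_n)$, the triangle inequality and the Lipschitz hypothesis give
\begin{equation}
e_{n+m} \le e_n + L\,\norm{\hat{x}_{t_n}^\phi - x_{t_n}}_2 .
\end{equation}
The new ingredient relative to the one-step proof is the factor $\norm{\hat{x}_{t_n}^\phi - x_{t_n}}_2$, since $\hat{x}_{t_n}^\phi$ is now produced by $m$ successive Euler-type sub-steps from $t_{n+m}$ down to $t_n$ rather than a single jump. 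I would control this global error by a discrete Gr\"onwall argument: each sub-step contributes a local error $O((\Delta t)^{p+1})$, and errors from earlier sub-steps are amplified by at most $(1+L'\Delta t)$ per step, where $L'$ is the Lipschitz constant of the solver update $\Phi$; summing over the fixed number $m$ of sub-steps yields an accumulated error $O(m(\Delta t)^{p+1}) = O((\Delta t)^{p+1})$.

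It then remains to unroll the recursion. Iterating $e_{n+m} \le e_n + O((\Delta t)^{p+1})$ along a residue class modulo $m$ requires at most $N/m$ links to descend from index $n$ to its base index; since each link costs $O((\Delta t)^{p+1})$ and $N = O(1/\Delta t)$, the total is $(N/m)\cdot O((\Delta t)^{p+1})$, which after absorbing the constant $m$ is $O((\Delta t)^p)$ — the familiar ``one order lost to the global horizon'' phenomenon. Uniformity in $n$ and $x$ follows from the uniform Lipschitz and local-error hypotheses, completing the bound.

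The main obstacle I anticipate is the base case for the $m-1$ residue classes that do not contain index $1$: stepping by $m$ only chains each such class down to some $t_r$ with $2 \le r \le m$, which the boundary condition at $\kappa$ does not directly pin down. The honest remedy is to bound $e_r$ via the proximity of $t_r$ to $\kappa$ (note $t_r - \kappa \le (m-1)\Delta t$) using continuity of $f_\theta$ and $f(\cdot;\phi)$ near the boundary; this is clean for the first-order solver $p=1$, where the resulting $O(\Delta t)$ term already matches the target order, but for $p\ge 2$ it demands a sharper boundary estimate (or an auxiliary fine-grid consistency constraint) to avoid degrading the rate. I would isolate this boundary analysis as the one genuinely new step beyond a near-verbatim transcription of the single-step theorem.
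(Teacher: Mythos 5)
Your proposal is correct and follows essentially the same route as the paper's proof: the same zero-loss identity combined with trajectory-invariance of $f(\cdot,\cdot;\phi)$ to get the recursion $\|e_{n+m}\|_2 \le \|e_n\|_2 + L\|\hat{x}_{t_n}^\phi - x_{t_n}\|_2$, the same boundary base case, and the same unrolling to $O((\Delta t)^p)$. If anything you are more careful than the paper, which (a) asserts the $m$-sub-step solver error bound $\|\hat{x}_{t_n}^\phi - x_{t_n}\|_2 = O(\max_k (t_{k+1}-t_k)^{p+1})$ without your discrete Gr\"onwall justification, and (b) only chains the recursion along the indices $0, m, 2m, \ldots, jm = N$ with base case $e_0 = 0$, silently ignoring the other $m-1$ residue classes — so the boundary subtlety you isolate (harmless for $p=1$, rate-degrading for $p \ge 2$ without a sharper estimate) is a gap in the paper's own argument as well, not a defect of your plan.
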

\begin{proof}
    From the loss $Dis(f_\theta(x_{t_{n+m}}, t_{n+m}),f_{\theta}(\hat{x}_{t_n}^\phi, t_n)) = 0$, we have:
    % \begin{align}
    %     \mathcal{L}_\text{CD}^N(\theta, \theta; \phi) = \mathbb{E}[\lambda(t_n) d(f_\theta({x}_{t_{n+1}}, t_{n+1}), f_{\theta}(\hat{x}_{t_n}^\phi, t_n))] = 0.
    % \end{align}
    % According to the definition, we have $p_{t_n}(x_{t_n}) = p_\text{data}(x) \otimes \mathcal{N}(\mathbf{0}, t_n^2 \mathrm{I})$ where $t_n \geq \epsilon > 0$. It follows that $p_{t_n}(x_{t_n}) > 0$ for every $x_{t_n}$ and $1 \leq n \leq N$. Therefore, \cref{eq:zero_loss} entails
    % \begin{align}
    %     \lambda(t_n) d(f_\theta({\rvx}_{t_{n+1}}, t_{n+1}), f_{\theta}(\hat{x}_{t_n}^\phi, t_n)) \equiv 0.
    % \end{align}
    % Because $\lambda(\cdot) > 0$ and $d(x, y) = 0 \Leftrightarrow x = y$, this further implies that
    \begin{align}
        f_\theta(x_{t_{n+m}}, t_{n+m}) \equiv f_{\theta}(\hat{x}_{t_n}^\phi, t_n).\label{eq:zero_loss_identity}
    \end{align}
Let $e_{n} \coloneqq f_\theta(x_{t_{n}}, t_{n}) - f(x_{t_n}, t_n; \phi).$
    % represent the error vector at $t_n$, which is defined as
    % \begin{align*}
    %     e_{n} \coloneqq f_\vtheta(x_{t_{n}}, t_{n}) - f(x_{t_n}, t_n; \phi).
    % \end{align*}
    % We can easily derive the following recursion relation
    We obtain the subsequent recursive formula:
    \begin{align}
        e_{n+m} &= f_\theta(x_{t_{n+m}}, t_{n+m}) - f(x_{t_{n+m}}, t_{n+m}; \phi)\notag\\
        &\stackrel{(i)}{=} f_\theta(\hat{x}_{t_{n}}^\phi, t_{n}) - f(x_{t_{n}}, t_{n}; \phi)\notag\\
        &= f_\theta(\hat{x}_{t_{n}}^\phi, t_{n}) - f_\theta(x_{t_n}, t_n) + f_\theta(x_{t_n}, t_n) - f(x_{t_{n}}, t_{n}; \phi)\notag\\
        &= f_\theta(\hat{x}_{t_{n}}^\phi, t_{n}) - f_\theta(x_{t_n}, t_n) + e_{n},
    \end{align}
    where (i) is due to \cref{eq:zero_loss_identity} and $f(x_{t_{n+m}}, t_{n+m}; \phi) =  f(x_{t_{n}}, t_{n}; \phi)$. Considering $f_\theta(\cdot, t_n)$ has Lipschitz constant $L$, we have:
    \begin{align}
       \norm{e_{n+m}}_2 &\leq \norm{e_{n}}_2 + L \norm{\hat{x}_{t_n}^\phi - x_{t_n}}_2\\
        &\stackrel{(i)}{=}  \norm{e_{n}}_2 + L\cdot O(\max_{k \in \llbracket n, n+m-1\rrbracket}(t_{k+1} - t_k)^{p+1})\\
        &=\norm{e_{n}}_2 + O(\max_{k \in \llbracket n, n+m-1\rrbracket}(t_{k+1} - t_k)^{p+1}).
    \end{align}
    % where (i) holds because the ODE solver has local error bounded by $O((t_{n+1}-t_n)^{p+1})$. In addition, we observe that $e_1 = \bm{0}$, because
Considering the definition of \( f \), we have:
    \begin{align}
        e_0 &= f_\theta(x_{t_0}, t_0) - f(x_{t_0}, t_0; \phi) \\
        % &\stackrel{(i)}{=} x_{t_1} - f(x_{t_1}, t_1; \phi)\\
        &\stackrel{(ii)}{=} x_{t_0} - x_{t_0}\\
        &= \bm{0}.
    \end{align}
    % Here (i) is true because the consistency model is parameterized such that $f(x_{t_1}, t_1; \phi) = x_{t_1}$ and (ii) is entailed by the definition of $f(\cdot, \cdot; \phi)$. This allows us to perform induction on the recursion formula \cref{eq:recursion} to obtain
    Let $j*m==N$, we have:
    \begin{align}
        \norm{e_{m*j}}_2 &\leq \norm{e_{0}}_2 + \sum_{k=0}^{j-1} O(\max_{l \in \llbracket k*m, (k+1)*m-1\rrbracket}(t_{l+1} - t_{l})^{p+1}) \\
        &= \sum_{k=0}^{j-1} O(\max_{l \in \llbracket k*m, (k+1)*m-1\rrbracket}(t_{l+1} - t_{l})^{p+1})\\
        &= \sum_{k=0}^{j-1} (\max_{l \in \llbracket k*m, (k+1)*m-1\rrbracket}(t_{l+1} - t_{l})) O(\max_{l \in \llbracket k*m, (k+1)*m-1\rrbracket}(t_{l+1} - t_{l})^{p})\\
        &\leq \sum_{k=1}^{j-1} (T-\epsilon) O(\max_{l \in \llbracket k*m, (k+1)*m-1\rrbracket}(t_{l+1} - t_{l})^{p})\\
        &\leq \sum_{k=1}^{j-1} (T-\epsilon) O((\Delta t)^{p})\\
        % &\leq  O((\Delta t)^p) \sum_{k=0}^{j-1} (T-\epsilon)\\
        % &= O((\Delta t)^p) (t_{j*m} - t_1)\\
        &= O((\Delta t)^p)
        %O((\Delta t)^p) (T-\epsilon)\\
        %&= O((\Delta t)^p),
    \end{align}
    which completes the proof. Eq. 24 and Eq. 25 demonstrate that our method has a smaller error upper bound.
\end{proof}

\begin{table}[h]
  \caption{MSE Loss of Feature Extracted by DINOv2 During LGP and ACD Stages.}
  \label{tab:DINOv2loss}
  \centering

  \begin{tabular}{ccc}
    \toprule
Stage &MSE(DINOv2($x^\text{Image}_{in}$),DINOv2($x^\text{Predict}$)) &MSE(DINOv2($f_\theta(x_{t_{n+m}},t_{n+m})$),DINOv2($x^\text{Predict}$))\\	\midrule
LGP  &0.21  &0.26  \\ 
ACD &0.0022 &4.09e-5 \\ 
    \bottomrule
  \end{tabular}

\end{table}
\section{Discussion}
\subsection{Discussion on LGP and ACD}\label{app:discussion}
% $\mathcal{L}_{\text{LGP}}^{adv}$ is always non-zero, while $\mathcal{L}_{\text{ACD}}^{adv}$ will also converge to zero. 
We demonstrate the convergence of training at different stages based on PCM~\cite{wang2024phased}. 
Let the data distribution used in the LGP and ACD phases be denoted as \( p_0 \), and the forward conditional probability path is defined as \( \alpha_{t} \mathbf{x}_0 + \sigma_{t} \boldsymbol{\epsilon} \). The intermediate distribution is then defined as \( {p}_{t} (\mathbf{x}) = ({p}_{0}(\frac{\mathbf{x}}{\alpha_{t}}) \cdot \frac{1}{\alpha_t}) \ast \mathcal{N}(0, \sigma_{t}) \). Similarly, the data distribution used for pretraining the diffusion model is denoted as \( {p}^{\text{pretrain}}_0(\mathbf{x}) \), and the corresponding intermediate distribution during the forward process is \( {p}^{\text{pretrain}}_{t}(\mathbf{x}) = ({p}^{\text{pretrain}}_{0}(\frac{\mathbf{x}}{\alpha_{t}}) \cdot \frac{1}{\alpha_t}) \ast \mathcal{N}(0, \sigma_{t}) \). This is reasonable because current large diffusion models are typically trained with more resources on larger datasets compared to those used for consistency distillation. We denote \( \mathcal{T}^{\boldsymbol{\phi}}_{t \to s} \), \( \mathcal{T}^{\boldsymbol{\theta}}_{t \to s} \), and \( \mathcal{T}^{\phi'}_{t \to s} \) as the flow operators corresponding to the pre-trained diffusion model, the flow operators corresponding to our consistency model, and the PF-ODE of the data distribution used for consistency distillation, respectively. 

We first discuss the convergence of \( \mathcal{L}_{\text{ACD}}^{adv} \). We have \( f_\theta(x_{t_{n+m}}, t_{n+m}) \equiv f_{\theta}(\hat{x}_{t_n}^\phi, t_n) \), where \( x_{t_{n+m}} \in p_{n+m} \) and \( x_{t_{n}} \in p_{n} \). Consequently, we obtain:
\begin{align}
    \mathcal{T}^{\boldsymbol{\theta}}_{t_{n+m} \to \epsilon} \# \mathbb{P}_{t_{n+m}} \equiv \mathcal{T}^{\boldsymbol{\theta}}_{t_{n} \to \epsilon} \mathcal{T}^{\boldsymbol{\phi}}_{t_{n+m} \to t_{n}} \# \mathbb{P}_{t_{n+m}}\, .
\end{align}
Therefore, if $Dis(f_\theta(x_{t_{n+m}}, t_{n+m}),f_{\theta}(\hat{x}_{t_n}^\phi, t_n)) = 0$, we have $\mathcal{L}_{\text{ACD}}^{adv}=0$.

We discuss the convergence of \( \mathcal{L}_{\text{LGP}}^{adv} \). We have:
\begin{equation}
    p_{0} \equiv \mathcal T^{\boldsymbol \phi'}_{t_{n+m}\to 0} \# p_{t_{n+m}} \, .
\end{equation}
Therefore, we have
\begin{align}
    &Dis\left(\mathcal T^{\boldsymbol \theta}_{t_{n+m}\to \epsilon}\#p_{t_{n+m}}\middle\| p_0 \right) \, \\
    =& Dis\left(\mathcal T^{\boldsymbol \theta}_{t_{n+m}\to \epsilon}\#p_{t_{n+m}}\middle\|   \mathcal T^{\boldsymbol \phi'}_{t_{n+m}\to 0} \# p_{t_{n+m}} \right)
\end{align}
Because \( f_\theta(x_{t_{n+m}}, t_{n+m}) \equiv f_{\theta}(\hat{x}_{t_n}^\phi, t_n) \), we have:
\begin{align}
    & Dis\left(\mathcal T^{\boldsymbol \theta}_{t_{n+m}\to \epsilon}\#p_{t_{n+m}}\middle\|   \mathcal T^{\boldsymbol \phi'}_{t_{n+m}\to 0} \# p_{t_{n+m}} \right) \, \\
    =& Dis\left(\mathcal T^{\boldsymbol \phi}_{t_{n+m}\to \epsilon}\#p_{t_{n+m}}\middle\|   \mathcal T^{\boldsymbol \phi'}_{t_{n+m}\to 0} \# p_{t_{n+m}} \right) \\
    =& Dis\left(p_0^{\text{pretrain}}\middle\|   p_0 \right) 
\end{align}
Because $p^\text{pretrain}_{0} \not=p_{0}$, we have $\mathcal{L}_{\text{LGP}}^{adv}>0$.

We consider the input condition \( x_{\text{in}}^{\text{Image}} \) for the diffusion model, which involves replicating the image condition across multiple frames to align with the frame count of the original video. The output of our consistency model is \( f_\theta(x_{t_{n+m}}, t_{n+m}) \). During the LGP phase, our prediction target is \( x^{\text{Predict}} = x_0 \). During the ACD phase, our prediction target is \( x^{\text{Predict}} = f_{\theta^{-}}(\hat{x}^\phi_{t_n}, t_n) \).

We extract features from these data using DINOv2 and compute the MSE loss of these features. As shown in Table~\ref{tab:DINOv2loss}, during the LGP phase, the difference between \( x_{\text{in}}^{\text{Image}} \) and \( x^{\text{Predict}} \) is minimal, indicating that our consistency model tends to predict multiple static images. During the ACD phase, the difference between \( f_\theta(x_{t_{n+m}}, t_{n+m}) \) and \( x^{\text{Predict}} \) is minimal, indicating that our consistency model tends to predict data generated by the pre-trained model. Although random noise is added to \( x_{\text{in}}^{\text{Image}} \) in actual training, this does not fundamentally solve the issue. However, fortunately, using LGP in the early stage of model training can accelerate the convergence of our distillation model. Figure~\textcolor{blue}{\ref{fig:teaser1}} demonstrates the effectiveness of using LGP initially.

\subsection{Contributions}\label{app:contributions}
Here, we re-emphasize the key components of OSV and summarize the contributions of our work.

The primary motivation of this research is to expedite the sampling process for high-resolution image-to-video generation by leveraging the consistency model training paradigm.  Previous methods, including Animate-LCM and SF-V, sought to harness the potential of consistency models in this demanding scenario but failed to deliver satisfactory outcomes.  We systematically examine and dissect the limitations of these approaches from three distinct perspectives.  Crucially, these methods largely represent direct extensions of techniques originally devised to accelerate text-to-image sampling, and their straightforward adaptation to image-to-video sampling introduces significant challenges.  To address these issues, we broaden the design space and propose comprehensive solutions to overcome these limitations.

The OSV framework is built upon the decomposition of the training process into two distinct stages, each utilizing a tailored distillation method to ensure efficient and effective model training.  In the second stage, we introduce a multi-step solving method that capitalizes on the teacher model to execute multiple reverse ODE processes, thereby enhancing prediction accuracy.  As illustrated in Figure~\ref{fig:teaser}, this multi-step solving method not only accelerates training but also significantly improves the performance of the consistency model.

Furthermore, inspired by the inherent properties of consistency models, we propose a novel higher-order solver, termed TTS, to replace the conventional CFG method.  Experimental evaluations substantiate the efficacy of TTS, with results demonstrating state-of-the-art image-to-video generation performance.  Remarkably, our approach achieves this using only 8 H800 GPUs (with merely 2 H800 GPUs required in the second stage), underscoring the efficiency and effectiveness of the proposed method.

\subsection{Removing CFG}
We introduce CFG into the distilled model: $\hat{\Phi}(\mathbf{x}_{t_{n+m}}, t_{n+m}, c; \phi) = {\Phi}(\mathbf{x}_{t_{n+m}}, t_{n+m}, c_{zero}; \phi) + w * ({\Phi}(\mathbf{x}_{t_{n+m}}, t_{n+m}, c; \phi) - {\Phi}(\mathbf{x}_{t_{n+m}}, t_{n+m}, c_{zero}; \phi)).$ This means the model already has CFG during inference, and using the same CFG scale again during inference leads to exposure issues in the generated videos. Table 2f also shows that a smaller CFG scale does not significantly improve the video quality. Removing CFG not only speeds up the model generation but also improves the overall quality of the generated videos.

\section{Additional Experimental Settings}
$\lambda^{LGP}$ and $\lambda^{ACD}$ are set to 0.1. In the Huber Loss, we set $c = 0.001$.

We train the model with videos of 14 frames, and the test videos also consist of 14 frames.

We use TTS only when the step equals 1.

% \begin{table}[h]
%   \caption{Effect of Upsampling Module.}
%   \label{tab:Upsampling}
%   \centering

%   \begin{tabular}{cc}
%     \toprule
% Upsampling & FVD$\downarrow$ \\	\midrule
% $\checkmark$  &171.15    \\ 
% $\times$ &194.83  \\ 
%     \bottomrule
%   \end{tabular}

% \end{table}

\section{Upsampling Module}
As shown in Figure\textcolor{red}{~\ref{fig:upsample}}, the upsampling module is displayed. First, we increase the number of channels of the latent space features, and then upsample the latent space features using the PixelShuffle operation. We set $r = 4$.

% As shown in Table\textcolor{red}{~\ref{tab:Upsampling}}, using the upsampling module helps reduce the information loss in videos after they pass through the VAE Encoder.

\begin{figure}[t]
\centering
\includegraphics[width=0.2\linewidth]{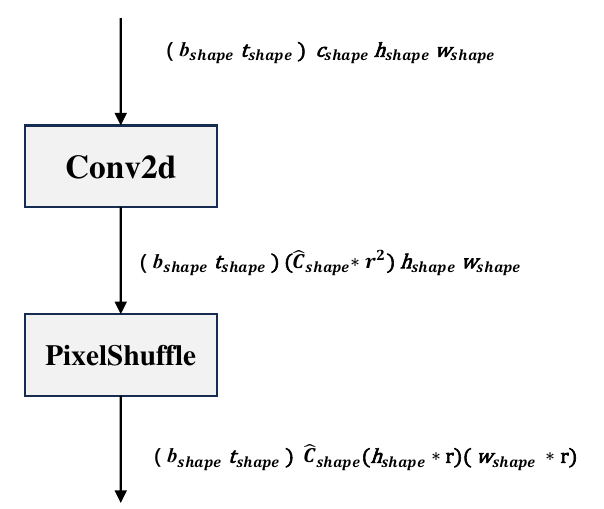}
\caption{Upsampling Module. We design the upsampling module inspired by sub-pixel convolution~\cite{shi2016real}. }
\label{fig:upsample}
\end{figure}

% \clearpage
% \setcounter{page}{1}
% \maketitlesupplementary

% \section{Rationale}
% \label{sec:rationale}
% % 
% Having the supplementary compiled together with the main paper means that:
% % 
% \begin{itemize}
% \item The supplementary can back-reference sections of the main paper, for example, we can refer to \cref{sec:intro};
% \item The main paper can forward reference sub-sections within the supplementary explicitly (e.g. referring to a particular experiment); 
% \item When submitted to arXiv, the supplementary will already included at the end of the paper.
% \end{itemize}
% % 
% To split the supplementary pages from the main paper, you can use \href{https://support.apple.com/en-ca/guide/preview/prvw11793/mac#:~:text=Delete%20a%20page%20from%20a,or%20choose%20Edit%20%3E%20Delete).}{Preview (on macOS)}, \href{https://www.adobe.com/acrobat/how-to/delete-pages-from-pdf.html#:~:text=Choose%20%E2%80%9CTools%E2%80%9D%20%3E%20%E2%80%9COrganize,or%20pages%20from%20the%20file.}{Adobe Acrobat} (on all OSs), as well as \href{https://superuser.com/questions/517986/is-it-possible-to-delete-some-pages-of-a-pdf-document}{command line tools}.

% {
%     \small
%     \bibliographystyle{ieeenat_fullname}
%     \bibliography{main}
% }

% WARNING: do not forget to delete the supplementary pages from your submission 
% \input{sec/X_suppl}

\end{document}